\definecolor{lightgray}{rgb}{0.9,0.9,0.9}
\pgfplotsset{compat=1.18} 
\newtheorem{proposition}{Proposition}
\newcommand{\w}{WebShaper}
\useunder{\uline}{\ul}{}
\def\eqref#1{equation~\ref{#1}}
\def\1{\bm{1}}
\DeclareMathAlphabet{\mathsfit}{\encodingdefault}{\sfdefault}{m}{sl}
\SetMathAlphabet{\mathsfit}{bold}{\encodingdefault}{\sfdefault}{bx}{n}
\def\gE{{\mathcal{E}}}
\def\gT{{\mathcal{T}}}
\newcommand*\justify{%
  \fontdimen2\font=0.4em
  \fontdimen3\font=0.2em
  \fontdimen4\font=0.1em
  \fontdimen7\font=0.1em
  \hyphenchar\font=`\-
}
\renewcommand{\texttt}[1]{%
  \begingroup
  \ttfamily
  \begingroup\lccode`~=`/\lowercase{\endgroup\def~}{/\discretionary{}{}{}}%
  \begingroup\lccode`~=`[\lowercase{\endgroup\def~}{[\discretionary{}{}{}}%
  \begingroup\lccode`~=`.\lowercase{\endgroup\def~}{.\discretionary{}{}{}}%
  \catcode`/=\active\catcode`[=\active\catcode`.=\active
  \justify\scantokens{#1\noexpand}%
  \endgroup
}
\newcommand*\myfontsize{%
  \@setfontsize\myfontsize{7}{8}%
}
\definecolor{uclablue}{RGB}{159, 195, 224}
\definecolor{uclagold}{RGB}{255, 240, 180}
\definecolor{aliceblue}{RGB}{255, 238, 241}
\definecolor{cadmiumgreen}{rgb}{0.0, 0.42, 0.24}
\definecolor{myred}{rgb}{0.7, 0.3, 0.0}
\definecolor{myblue}{rgb}{0.2, 0.3, 0.6}
\definecolor{babygreen}{rgb}{0.85, 0.97, 0.85}
\newcommand{\rcup}{\cup}
\definecolor{purple1}{RGB}{126, 107, 196}
\definecolor{purple2}{RGB}{199, 158, 207}
\definecolor{purple3}{RGB}{214, 200, 255}
\definecolor{purple4}{RGB}{254, 240, 255}
\definecolor{custompurple}{HTML}{491f97}
\definecolor{deepblue}{RGB}{48, 58, 82}
\definecolor{darkred}{HTML}{C00000}
\newcommand{\symboletongyi}{\raisebox{0pt}{~\includegraphics[scale=0.012]{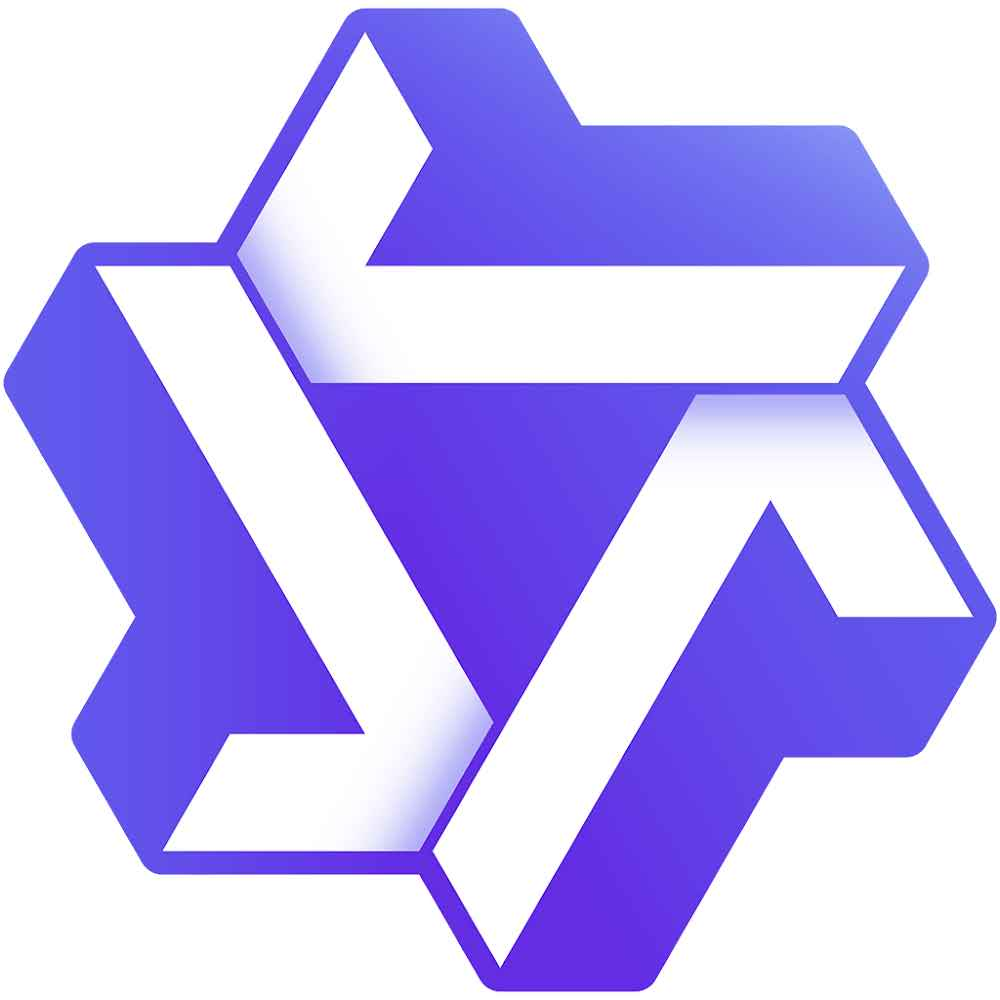}}~}
\definecolor{deepPurple}{HTML}{330066}
\definecolor{uclablue_old}{rgb}{0.15, 0.45, 0.68}
\newtcolorbox{mybox}[2][]
  {colback = black!5!white, colframe = black!75!black, fonttitle = \bfseries,
    colbacktitle = black!100!black, enhanced, before upper={\fontsize{8}{11}\obeyspaces\obeylines\selectfont}, fontupper=\selectfont,
    attach boxed title to top left={yshift=-2.2mm,xshift=4mm},
    title=#2,#1}
\title{\w: Agentically Data Synthesizing via\\ Information-Seeking Formalization}
\author{%
\small{Zhengwei Tao$^{*}$, Jialong Wu$^{*}$, Wenbiao Yin$^{(\textrm{\Letter})}$, Junkai Zhang,  Baixuan Li, Haiyang Shen, Kuan Li, Liwen Zhang, Xinyu Wang, Yong Jiang$^{(\textrm{\Letter})}$, Pengjun Xie, Fei Huang, Jingren Zhou}%
  \\[1em]               
  {\fontsize{10pt}{11pt}\selectfont          
Tongyi Lab\symboletongyi, Alibaba Group}\\
}
\begin{document}

\maketitle

\begingroup
  \renewcommand\thefootnote{*} 
   \footnotetext{denotes equal contribution. $^{\textrm{\Letter}}$~ denotes the correspondence. \{yinwenbiao.ywb, yongjiang.yj\}@alibaba-inc.com} 
\endgroup

\begin{abstract}

The advent of Large Language Model (LLM)-powered agents has revolutionized artificial intelligence by enabling solutions to complex, open-ended tasks through web-based information-seeking (IS) capabilities.
The scarcity of high-quality training data has limited the development of IS agents. 
Existing data synthesis approaches typically adopt an \emph{information-driven} paradigm that first collects web data and then generates questions based on the retrieval.
However, this may lead to inconsistency between information structure and reasoning structure, as well as between the question and the corresponding answer.
To mitigate, we propose a \emph{formalization-driven} IS data synthesis framework \w, which systematically formalizes IS tasks using set-theoretic constructs.
Central to the formalization is the concept of Knowledge Projections (KP), which enables precise control over reasoning structure by KP operation compositions. 
During synthesis, we begin by creating seed tasks, then use a multi-step expansion process.
At each step, an agentic Expander expands the current formal question more complex with retrieval and validation tools based on our formalization.
We train our model on the synthesized dataset.
Experiment results demonstrate that \w~achieves state-of-the-art performance among open-sourced IS agents on GAIA and WebWalkerQA benchmarks. 

\end{abstract}

\begin{figure*}[h]
    \centering
    \includegraphics[width=0.8\linewidth]{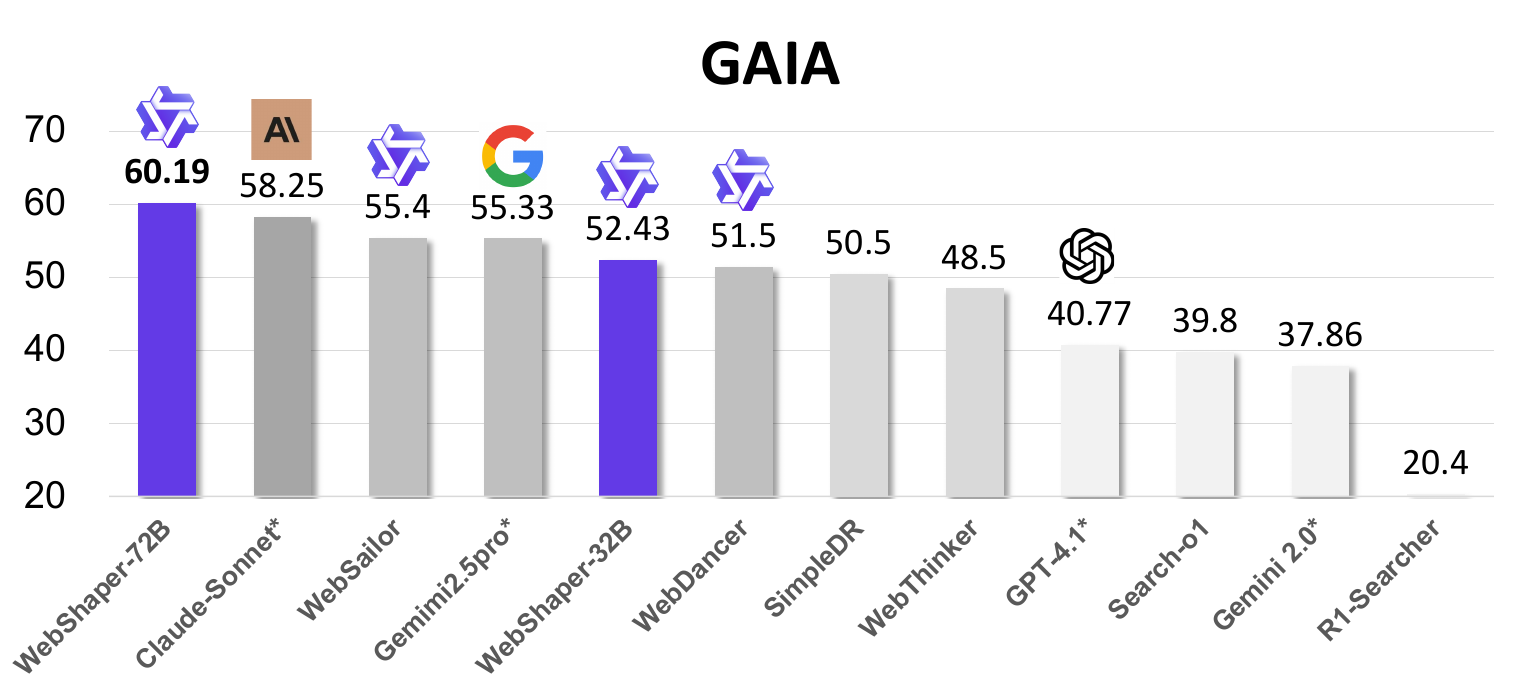}
    \caption{Results on GAIA information-seeking subset among the cutting-edge Deep Research models or systems.
    $^*$ denotes the results using our two browsing tools via function calling APIs.}
    \label{fig: intro}
\end{figure*}



\newpage

\section{Introduction}

The emergence of Large Language Model (LLM)-powered language agents has marked a paradigm-shifting advance in artificial intelligence, enabling transformative solutions to previously intractable challenges across domains~\citep{guo2024large, wang2024survey, autogpt, wu2023autogen, mplug-owl}.
Information-seeking (IS) represents a core component of the cognitive autonomy of language agents.
This capability not only underpins their adaptability in open-ended tasks but also powers a range of powerful commercial systems such as Deep Research of OpenAI~\citep{openaidr}, Gemini~\citep{geminidr}, and Perplexity~\citep{perplexity}. 

Current agentic systems for unlocking this capability typically follow a well-established pipeline in agent development: 
(1) First, construct task-specific trajectories of question-answer pairs; 
(2) Employ supervised fine-tuning (SFT) to acquire foundational skills \citep{sun2025simpledeepsearcher}. (3) Generalize strategic decision-making through on-policy reinforcement learning (RL) \citep{jin2025search}.
The entire development of the IS agent originates from and its ultimate effectiveness depends on high-quality IS task training data.
However, due to its complexity, such a high-quality dataset is both sparse and difficult to construct through crowdsourcing. \textbf{Thus, constructing training data through a carefully designed agent pipeline becomes the cornerstone of effective IS agent development.}

\begin{figure*}[h]
    \centering
    \includegraphics[width=0.8\linewidth]{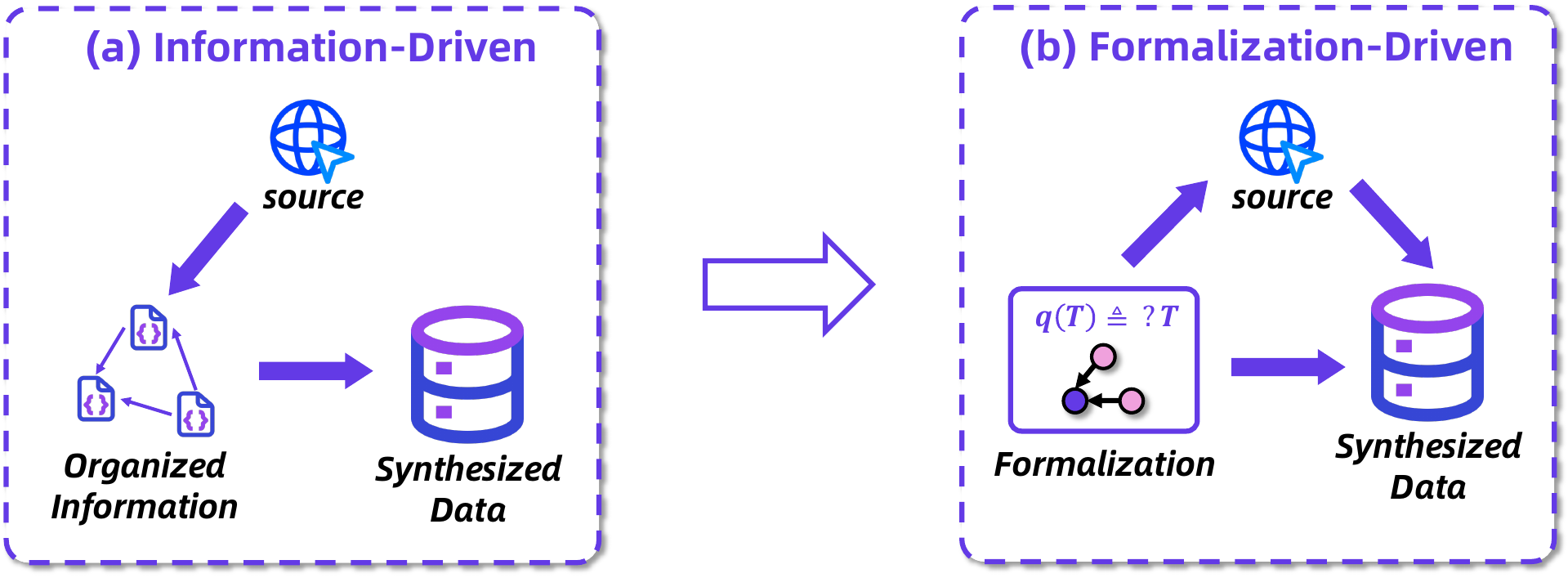}
    \caption{Data synthesis paradigm shift from information-driven to formalization-driven. ``Source'' stands for information sources such as the internet and databases.
    ``Data'' represents the synthesized QA data.
    (a) Previous methods retrieve and organize collected information in advance, then synthesize data according to the information structures.
    (b) Our method establishes the task formalization first, then collects information, and synthesizes QA data based on the formalization.
    }
    \label{fig:intro_example}
\end{figure*}

Existing IS dataset synthesis methods typically involve freely pre-searching for information online and employing LLMs to generate questions from the collected content (Figure~\ref{fig:intro_example}(a)). These approaches first organize the collected information into structured formats, then prompt the LLM with the structured data to produce natural language (NL) questions. Their core objective is to map \textit{\underline{information structures}} into \textit{\underline{reasoning structures}} within the resulting NL questions.
Representative methods like WebDancer~\citep{wu2025webdancer} and TaskCraft~\citep{shi2025taskcraft} generate linear information chains, while others construct graphs connected via web links~\citep{webwalker} or entity coreference networks~\citep{li2025websailor}. 
However, these information-driven approaches face two critical limitations. 
\textbf{First}, the synthesis using LLM may struggle to fully comprehend the information structure, resulting in inconsistent reasoning structures or incorrect answers to the generated NL questions. 
\textbf{Besides}, disordered information retrieval will lead to excessive data processing and will collect redundant homogeneous information structures, which limits the diversity of information structures and reduces knowledge coverage.

To overcome these limitations, we propose \w\footnote{Without loss of generality, we use \w~to denote our data method, dataset, and model.}, a formalization-driven IS data synthesis paradigm, WebShaper, as illustrated in Figure~\ref{fig:intro_example}(b). 
Unlike prior approaches, we first formalize information-seeking tasks and then systematically guide data synthesis through this formalization.
During generation, information collection is explicitly controlled by formal task requirements. This framework offers three key advantages:
\begin{enumerate}
    \item \textit{\textbf{Broader Task Coverage}}: Systematic exploration of task formalizations enables synthesizing diverse information-seeking patterns unconstrained by pre-retrieval content limitations;
    \item \textit{\textbf{Task Controllability}}: Explicit formalization parameters allow precise specification of reasoning structures and complexity levels;
    \item \textit{\textbf{Structural and Answer Consistency}}: Due to the inherent interpretability and verifiability of formalized representations, synthesized outputs exhibit fewer inconsistencies across both information-reasoning structures and question-answer pairs.
\end{enumerate}
\w~ works fundamentally because it introduces a formalization-guided framework that serves as a \textcolor{blue}{\textbf{structural skeleton}} during data synthesis. With this structured guidance, we produce consistent reasoning and redundancy while ensuring rich, diverse reasoning logic.

We leverage the proposed framework to construct the \w~dataset, which serves as training data for the IS agent.
At the core of our framework lies a formalization of IS tasks, which enables principled and systematic generation of task instances with controllable collection complexity and reasoning paths.
This overcomes the fragmented and ad-hoc nature of task construction in prior information-driven approaches.
Unlike relevant fields, where there exists task formalization in advance, such as Lean 4 language~\citep{moura2021lean} in math proving and propositional logic in knowledge-centric question answering~\citep{xia2025improving}, there's no established formalization for information-seeking. 
To the best of our knowledge, we are the first to derive it based on set theory.
\w~treats IS as a unified problem space where task is systematically derived from compositions of basic units termed Knowledge Projections (KP).
To align with the formalized structure, we initiate synthesis by constructing foundational seed tasks, followed by a multi-step expansion grounded in our formal framework. 
This process employs a dedicated agentic Expander module designed to interpret task requirements via KP representations. 
At each expansion stage, the expander transforms the current formal question into a more complicated one.
It implements layer-wise expansion mechanisms that minimize redundancy while preventing reasoning shortcuts through controlled complexity progression.
The Expander operates autonomously during synthesis, performing three core functions: (1) internet-based knowledge collection guided by formal requirements, (2) construction and validation of new formalized problems, and (3) generation of final questions. 
This process ensures a broad coverage of the formalized task space and the correctness of the question and answer.

We conduct extensive experiments to validate \w~dataset by training agents.
Comparison with the existing training dataset shows the effectiveness of \w.
\w~achieves best performances among all open-source IS agents on the GAIA and WebWalkerQA benchmarks. 
Further discussions demonstrate the validity of each module of our method. We summarize our contributions as:

\begin{itemize}
    \item We introduce \w, a formalization-driven data synthesis method for information-seeking agents, grounded in our proposed task formalization. 
    Leveraging this method, we construct the \w~dataset, which enables systematic generation of IS instances.
    \item We propose an agentic Expander that iteratively generates and validates questions in alignment with the formalization.
    \item We conduct extensive experiments across multiple benchmarks to evaluate the effectiveness of WebShaper. Empirical results demonstrate that models trained with \w~consistently outperform baselines, confirming the value of our formalization and synthesis approach.
\end{itemize}

\section{Information-Seeking Formalization}


\begin{figure*}
    \centering
    \includegraphics[width=0.8\linewidth]{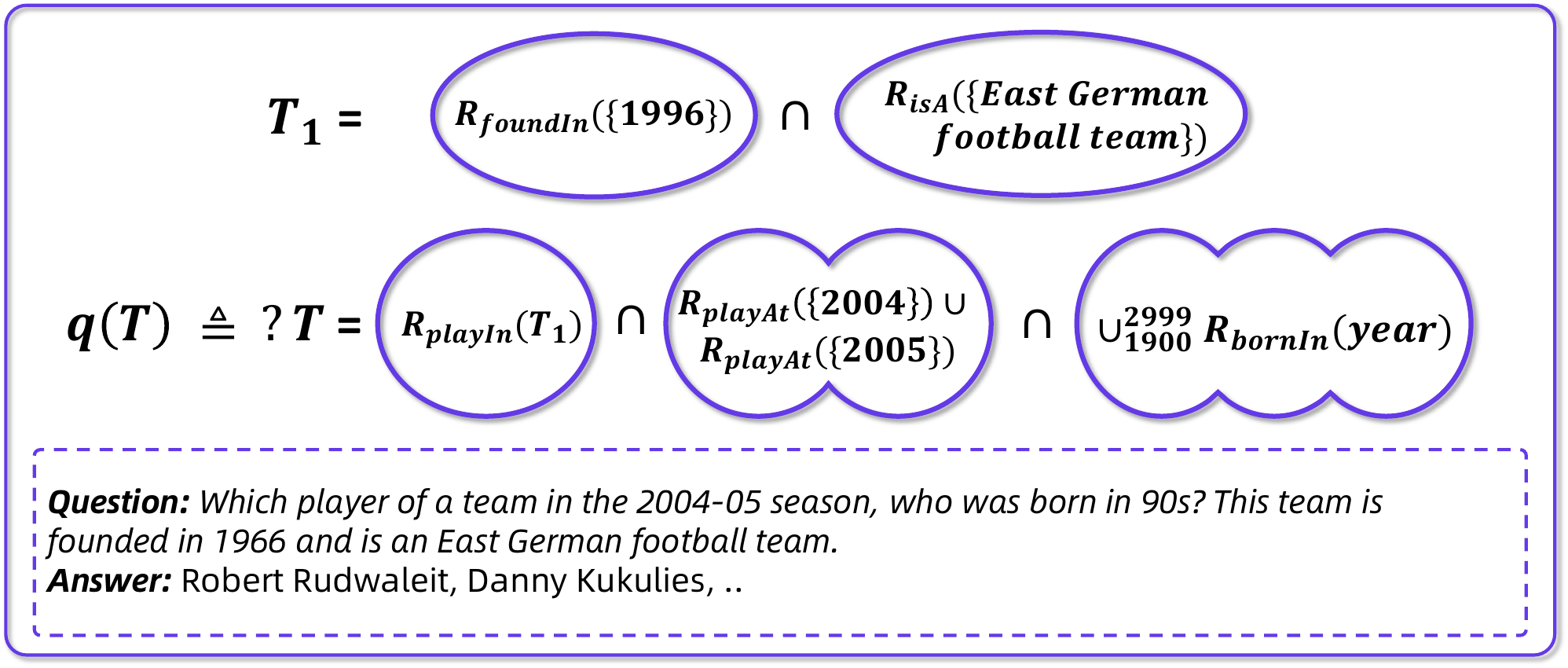}
    \caption{A question-answer case in our information-seeking formalization. We use the \textcolor{custompurple}{\textbf{purple}} diagram to represent a knowledge projection, which is a set of entities.
   }
    \label{fig: example}
\end{figure*}

In this section, we introduce our formalization of the information-seeking task. We illustrate an example in Figure~\ref{fig: example}. An information seeking task $q(T)$ aims to search for knowledge and facts prompted by given facts and locate the answer entity set $T$.
For a basic example also shown in Figure~\ref{fig: example}:

\begin{equation}
\label{eq: eg}
\begin{aligned}
    q(T)= & \textit{Which player of a team in the 2004-05 season, who was born in 90s?} \\
    & \textit{This team is founded in 1966 and is an East German football team.}
\end{aligned}
\end{equation}


To solve it, one should seek information about \textit{This team is founded in 1966 and is an East German football team} to find that the team is \textit{Berliner FC Dynamo}. And then seek for players of \textit{Berliner FC Dynamo team} in 2004 and 2005 respectively and \textit{players born in 90s}, then reason the answer $T=\{\textit{Robert Rudwaleit, Danny Kukulies, ..}\}$.

Let $\gE$ denote the universal set of entities (e.g., players, teams, years). 
Let $R\subseteq \gE \times \gE$ denote a subspace of entity pairs where they have a certain relation.
For example, if the relation is $\textit{bornIn}$, $R$ stands for all pairs of ($\textit{person}$, $\textit{year}$) where $\textit{person}$ is born in $\textit{year}$.

For a subset $V \subseteq \gE$ and a sub-space $R$, define a Knowledge Projection (KP):
\begin{equation}
\label{eq: KP}
R(V) = \{u \mid \exists v \in V,\ (u,v) \in R \text{ or } (v,u) \in R\}.
\end{equation}
For example, when $R$ denotes entity pairs of relation $\textit{bornIn}$, $R(\{\textit{90s}\})$ represents the set of all people born in 90s. \textbf{A KP is \textcolor{blue}{the set of entities} under a certain relation to other entities, which is the basic unit in an information-seeking task.} KP has two operations:

\paragraph{$R$-Union $\rcup$}  
In IS, the question may be seeking for a broader condition due to uncertainty about the target. For instance, we only know the target player was playing between 2000-2010 rather than the exact year in advance. The condition can not be more specific than a year range.

Therefore, given $S_1, S_2$ be entity sets and $R$, then:  
\begin{equation}
\label{eq: union}
R(V) = R(S_1) \rcup R(S_2) \rcup \cdots \rcup R(S_m)
\end{equation}
represents $R(V)$ is the union result set in which the entities have a certain relation to entries in either $S_1$, $S_2$, ..., $S_m$.
If $R$ stands for relation $\textit{playAt}$, then the set of players who play between 2000-2010 is $R(\{\textit{2000}\}) \rcup R(\{\textit{2001}\}) \rcup \cdots \rcup R(\{\textit{2010}\})$.



\paragraph{Intersection $\cap$}  
Some IS tasks require the target to satisfy several conditions simultaneously. It's interpreted as an Intersection operation of KP:

\begin{equation}
\label{eq: intersection}
R(V) = R_1(S_1) \cap R_2(S_2) \cap \cdots \cap R_n(S_n)
\end{equation}
where $R_i$ are about different relations. 
For example, if $R_1$ is about $\textit{playAt}$ and $R_2$ is about $\textit{bornIn}$, then $R_1(\{\textit{2000}\}) \cap R_2(\{\textit{90s}\})$ stands for players playing in $\textit{2000}$ and born in $\textit{90s}$.

Based on $R$-Union and Intersection operations, we introduce IS task formalization. First, we define $T$ as a target set:

\begin{equation}
\label{eq: t}
T =\bigcap_{i=1}^p (R_i(S_{i,1})\cup R_i(S_{i,2})\cup\ldots R_i(S_{i,t_i}))).
\end{equation}

$S_{i,j}\subset \gE$ is an entity set. More generally, $T$ can be recursivelly derived by replacing $S_{i,j}$ with other target set as:

\begin{equation}
\label{eq: t_recurssive}
T=R_1(T_1)\cap R_2(T_2)\cap\ldots\cap R_k(T_k)
\end{equation}

An IS task is to find what entities a questioned $T$ contains:

\begin{equation}
\label{eq: qt}
q(T) \triangleq ?T 
\end{equation}

Therefore, the question example in Eq. (\ref{eq: eg}) can be formalized as:

\begin{equation}
\begin{aligned}
    q(T) \triangleq ?T = & R_{playIn}(T_{1}) \cap (R_{playAt}(\{2004\}) \cup R_{playAt}(\{2005\})) \cap \bigcup_{1900}^{1999} R_{bornIn}(\{y\}))\\
    & T_{1} = R_{foundIn}(\{1996\})\cap R_{isA}(\{East\ German\ football\ team\})
\end{aligned}
\end{equation}

\section{Data Synthesis}

In this section, we describe the process of our data synthesis with our task formalization. 
As Eq.~(\ref{eq: t}-\ref{eq: qt}) shows, an IS task is recursively composited by knowledge projections. In order to better fit the IS task formalization, we start with constructing a seed task, followed by a multi-step expansion approach. This expansion process is built upon our formalization. We then introduce an agentic Expander. It can understand the task formalization with our KP representation. At each expansion step, we implement the layer-wise expansion to reduce redundancy and reasoning shortcuts. 
The Expander autonomously retrieves knowledge from the internet, constructs and validates the new FPs to obtain the new question. We elaborate on this process in the following sections.


\subsection{Seed Question Construction}

The first stage of our data synthesis pipeline involves acquiring a substantial volume of diverse and non-trivial seed questions. To enhance acquisition efficiency, we constructed an offline Wikipedia database by downloading all URLs corresponding to Wikipedia articles while preserving the hyperlinks between them. Subsequently, we perform random walks across these articles through their preserved connections. By aggregating the content from articles traversed during these random walks, we utilize an LLM to generate synthetic data instances. Critically, the generated question-answer pairs must be entirely grounded in the content from the collected articles, without relying on external knowledge sources. 

However, the resulting seed questions could be noisy and contain hallucinations. We launch a filtering process. We complete all the seed questions by WebDancer framework~\citep{wu2025webdancer} based on the QwQ model~\citep{qwq}. We perform 5 times rollouts for each question and keep the data where there must be as least one rollout correctly answering the question. 
We finally construct 18k seed questions. 
We denote the harvested seed question as $q^{1}(T)$.

\subsection{Agentic Expansion}
Subsequently, we progressively expand seed questions into increasingly complex ones through $n$-step expansion $q^{n+1}(T)=\mathrm{Expand}(q^{n}(T))$ guided by the task formalization. However, the IS formalization in Eq.~(\ref{eq: t}-\ref{eq: qt}) is complicated. The nature of recursion and the composition of multiple operations are hard for the model to understand during the synthesis. Besides, since the synthesis relies on retrieving new knowledge online, there are several intermediate processes, such as knowledge filtering and selection. 

Therefore, we establish an Agentic Expansion. We first introduce the KP representation, which enables clear comprehension of our IS formalization. Then, we propose the Layer-wise Expansion Strategy to mitigate the limitations of redundant and reasoning shortcuts. The core of the expansion is the Expander, which is an agent itself to autonomously retrieve information and validate the generation.




\subsubsection{KP Representation}
Since $q(T)$ contains recursion and composition of $R$-Union and Intersection operations, it's not trivial to represent $q(T)$ in the Expander agent prompt.
We introduce our KP Representation. The key to this representation is to: 1) represent a KP unit. 2) can handle $R$-Union and Intersection operations. 3) can handle recursions of KPs.
We start with introducing Constant and Variable:

\begin{itemize}
    \item \textit{Constant}: A constant is a subset of $\gE$ explicitly defined by its elements, e.g., $\{\textit{90s}\}, \{\textit{2004}, \textit{2005}\}$.
    \item \textit{Variable}: A variable is a subset of $\gE$ whose elements are not explicitly given.  
    It may appear as a symbolic placeholder in an expression.
\end{itemize}

Then, we use a triplet [$X$, $r$, $S$] to represent a KP $R(S)$. $r$ is the name of the relation $R$. $X$ is a variable while $S$ can be a variable or a constant. 

We use the prefix $V@$ followed by a variable to denote the variable $V$. We use the prefix $@C$ before its natural language description to represent a constant. For example, $R_{\textit{bornIn}}(\{\textit{90s}\})$ is represented as [$@V$, $\textit{bornIn}$, $\textit{90s}$].
The Intersection operation in Eq.(\ref{eq: intersection}) can be naturally represented as a list of triplets [[$X$, $r_1$, $S_1$], [$X$, $r_2$, $S_2$], ..., [$X$, $r_n$, $S_n$]].

For the $R$-Union in Eq.(\ref{eq: union}), simply expressing it in a list-like form will make the representation complicated in recursive $R$-Union and Intersection. We notice $R$-Union has the following proposition:

\begin{proposition}
\label{eq: union-property}

For a certain $R$, $R$-union satisfies the distributive Law:

\begin{equation}
\label{eq: prop}
R(S_1) \cup R(S_2) = R(S_1 \cup S_2)
\end{equation}

\end{proposition}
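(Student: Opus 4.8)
The plan is to establish the set equality $R(S_1) \cup R(S_2) = R(S_1 \cup S_2)$ by the standard method of mutual inclusion, working at the level of individual elements and unwinding the definition of a knowledge projection in Eq.~(\ref{eq: KP}). The whole argument rests on a single observation: the defining membership condition for $R(V)$, namely ``$\exists v \in V$ such that $(u,v) \in R$ or $(v,u) \in R$'', depends on $V$ only through an existential quantifier over $V$, and existential quantification commutes with union.

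Concretely, for the forward inclusion I would take an arbitrary $u \in R(S_1) \cup R(S_2)$ and split into the two cases $u \in R(S_1)$ and $u \in R(S_2)$; in either case Eq.~(\ref{eq: KP}) supplies a witness $v$ lying in $S_1$ (respectively $S_2$), and since $S_1, S_2 \subseteq S_1 \cup S_2$, that same witness certifies $u \in R(S_1 \cup S_2)$. For the reverse inclusion I would take $u \in R(S_1 \cup S_2)$, extract a witness $v \in S_1 \cup S_2$, and use the case split $v \in S_1$ or $v \in S_2$ to conclude $u \in R(S_1)$ or $u \in R(S_2)$ respectively, hence $u \in R(S_1) \cup R(S_2)$. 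Each direction is a two-line argument.

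I would also remark that nothing in the proof uses that there are exactly two sets, so the identity extends verbatim to $R\left(\bigcup_{j} S_j\right) = \bigcup_{j} R(S_j)$ for any finite (or infinite) family $\{S_j\}$; equivalently, $R(\cdot)$ is a complete join-homomorphism on the powerset lattice of $\gE$. This is precisely the form in which the proposition is used downstream, since it lets a nested $R$-union be collapsed into a single KP with a merged argument set, simplifying the KP representation in the Expander prompt.

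As for the main obstacle: there is none of mathematical substance here — this is a routine verification. The only point that warrants a moment's care is the symmetric disjunction ``$(u,v) \in R$ or $(v,u) \in R$'' in the definition of $R(V)$, but since that disjunction lives on the relation side and is independent of which of $S_1$, $S_2$ the witness $v$ comes from, it simply carries through both inclusions unchanged.
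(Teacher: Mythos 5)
Your proposal is correct and follows essentially the same route as the paper's proof: both directions are established by mutual inclusion, unwinding the definition of the knowledge projection in Eq.~(\ref{eq: KP}) and transporting the existential witness across the union via a case split. The extra remark on extending the identity to arbitrary families is a harmless (and accurate) generalization beyond what the paper states.
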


\begin{proof}
Let $x$ be an element of $R(S_1) \cup R(S_2)$.  
By Equation~\ref{eq: KP},  
there exists either a $y_1 \in S_1$ such that $(y_1, x) \in R$ or $(x, y_1) \in R$,  
or a $y_2 \in S_2$ such that $(y_2, x) \in R$ or $(x, y_2) \in R$.  
Consequently, there exists a $y \in S_1 \cup S_2$, e.g., $y_1$ or $y_2$,  
such that $(y, x) \in R$ or $(x, y) \in R$.  
Thus, we have $x \in R(S_1 \cup S_2)$,  
and hence $R(S_1) \cup R(S_2) \subseteq R(S_1 \cup S_2)$.

Conversely, let $z$ be an element of $R(S_1 \cup S_2)$.  
Then there exists a $y \in S_1 \cup S_2$ such that $(y, z) \in R$ or $(z, y) \in R$.  
If $y \in S_1$, then $z \in R(S_1)$;  
if $y \in S_2$, then $z \in R(S_2)$.  
In either case, $z \in R(S_1) \cup R(S_2)$.  
Therefore, $R(S_1 \cup S_2) \subseteq R(S_1) \cup R(S_2)$.

Combining both directions, we conclude that:
\[
R(S_1) \cup R(S_2) = R(S_1 \cup S_2).
\]

Thus, we end proof of the Proposition.
\vspace{-5mm}
\end{proof}

With this proposition, we represent the $R$-Union of KP by a merge set $S_1 \cup S_2$. In practice, we express the union of sets by induction (eg. $\{\textit{1990}\} \cup \{\textit{1991}\} \cup, \dots, \cup \{\textit{1999}\}$ as \{$\textit{90s}$\}). Or simply add underlines between them (eg. $\{\textit{1990}\} \cup \{\textit{1991}\} )$ as \{$\textit{1990}\_\textit{1991}$\}). After that, our representation would only have an intersection between triplets.




By introducing variables, our representation naturally handles KP recursion by faltten it into the intersection of KPs. For example, given a recursion $R^{1}(R^{2}(S))$, we can represent it as [[$V@X$, $r_1$, $V@Y$], [$V@Y$, $r_2$, $S$]].

Finally, an IS task $q(T)$ can be represented by a list of triplets.
For example, the question in Eq. (\ref{eq: eg}) can be represented as:

\begin{equation}
\label{eq: eg.r}
\begin{aligned}
        q(T) \triangleq ?T \quad s.t. &\quad [[V@\text{T}, \text{playIn}, V@X], \quad [V@\text{T}, \text{playAt}, C@\text{2004\_05}], \\
        &\quad  [V@\text{T}, \text{bornIn}, C@\text{90s}], \quad [V@X, \text{foundIn}, C@1966], \\
        &\quad [V@X, \text{isA}, C@\text{East German football team}]]
\end{aligned}
\end{equation}

\subsubsection{Layer-wise Expansion Strategy}

After representing the $q(T)$, we now elaborate on the expansion process in each iteration. Expansion strategy is key to our data synthesis. Compared to previous approaches that synthesize or extend questions at the natural language form, our formalization of IS tasks enables systematic analysis of structural question characteristics. This formal framework allows us to explicitly identify latent structural patterns within questions and perform a controlled and optimized expansion paradigm.

\begin{figure*}
    \centering
    \includegraphics[width=1\linewidth]{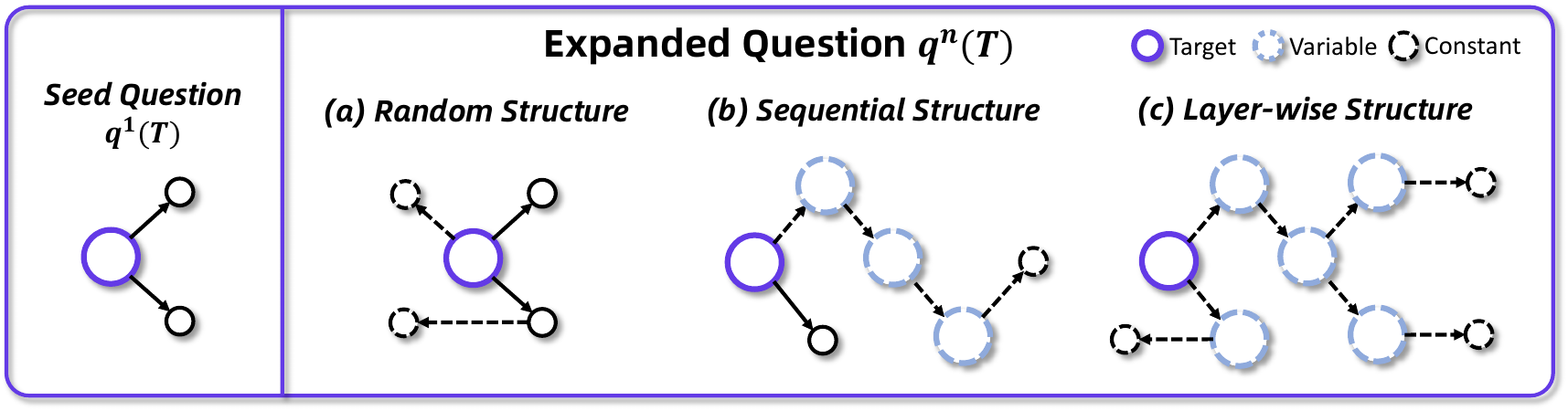}
    \caption{Structures on different expansion paradigms. \textbf{\textit{(a) Random Structure}} denotes expanding by randomly adding constants. 
    \textbf{\textit{(b) Sequential Structure}} is expanding on a chain of reasoning sequence. 
    \textbf{\textit{(c) Layer-wise Structure}} traverses layer-wisely on leaf constants and replaces them with variables.
    ``Target'' stands for target variable. 
    ``Variable'' means the intermediate variable. 
    ``Constant'' is the constant in our KP representation. 
   }
    \label{fig: expansion paradigm}
\end{figure*}

To clearly illustrate the expansion strategy, we show our KP representation in a graph. The nodes in the graph are variables and constants in the list of triplets. And the edges are the relations. For example, the question in Eq.~(\ref{eq: eg.r}) can be illustrated as a graph in Figure~\ref{fig: expansion paradigm}. The question requires determining the target variable via the given constants.

Previous methods are constrained by informal representations of natural language, which limit the controllable expansion and synthesis paradigms for questions. In our formalization language, previous methods would result in question structures as Random~\citep{webwalker, shi2025taskcraft} or Sequential~\citep{wu2025webdancer}. The Random structure stands for methods that directly add FP to any nodes in the graph shown in Figure~\ref{fig: expansion paradigm} (a). Sequential structure is resulted from generating the reasoning chain via a sequence shown in Figure~\ref{fig: expansion paradigm} (b). However, these two paradigms have key limitations:

\begin{itemize}
    \item \textit{Redundancy} As shown in Random structure in Figure~\ref{fig: expansion paradigm}, there exist constants connect to other constants. In this condition, such a sentence as "Dynamo Berlin is a football club based in Berlin" would exist in the question. However, it doesn't increase the reasoning chain of the task-solving.
    
    \item \textit{Reasoning Shortcut} As shown in the Sequential structure in Figure~\ref{fig: expansion paradigm}, there exists an FP which connects constants directly to the target. If this happens, models may guess the answer by only reasoning on the closer constants and neglecting the deeper sequence.
\end{itemize}

To mitigate these limitations, we introduce the Layer-wise Expansion Strategy. We layer-wisely traverse the graph to find all leaf constants. When we obtain all the leaf constants of the current graph, an Expander takes each constant once a time to construct this constant into new FPs. These FPs can form a sub-question that regards the constant as the answer. The expander then merges the sub-question to the current one to form a new one:
\begin{equation}
    q^{n+1}(T)=\text{Expander}(C, q^{n}(T)).
\end{equation}

Note that the $q^{n+1}(T)$ always has the same answer as $q^{n}(T)$. As illustrated in the Figure~\ref{fig: expansion paradigm}, in each expansion, the Expander takes a leaf constant node, turns it into a variable node connected with new nodes. The resulting structure would not have the Redundant and Reasoning Shortcut problems. The number of expanding layers $l$ is a hyperparameter for controlling the task coverage and difficulty.

\subsubsection{Expander Agent}


We now introduce the Expander, an autonomous agent designed to enhance question generation through iterative refinement. 
Given an input constant, the Expander first retrieves relevant contextual information, then formulates a semantically coherent sub-question. 
This sub-question is subsequently integrated with the original query to construct an enriched, context-aware question that better aligns with the underlying information-seeking objective.

The Expander builds upon \texttt{ReAct}~\citep{yao2023react}, a widely-adopted framework for language agents. 
A \texttt{ReAct} trajectory comprises multiple \textcolor{cyan}{\texttt{Thought}}\textbf{-}\textcolor{purple}{\texttt{Action}}\textbf{-}\textcolor{blue}{\texttt{Observation}} interaction cycles. In each cycle, the language model generates free-form \textcolor{cyan}{\texttt{Thought}} for strategic planning, executes structured \textcolor{purple}{\texttt{Action}} to interface with external tools, and receives \textcolor{blue}{\texttt{Observation}} feedback from the environment. 
Formally, the agent execution loop at time $t$ can be represented as $(\tau_t, \alpha_t, o_t)$, where $\tau$ denotes \textcolor{cyan}{\texttt{Thought}}, $\alpha$ signifies \textcolor{purple}{\texttt{Action}}, and $o$ represents \textcolor{blue}{\texttt{Observation}}. 
Each \textcolor{purple}{\texttt{Action}} $\alpha$ decomposes into $(\tau, \phi)$: $\tau$ specifies the action type (using one of the tools or answer), while $\phi$ contains required parameters. 
We equip the Expander with the following tools:

\begin{itemize}
    \item \texttt{Search} This action enables Expander to conduct Google search by severl queries about a constant $c$ and obtains search results. The parameters of this tool are $\phi = \{\textit{queries of}~$c$, \textit{filter\_year}\}$, enabling temporal filtering of search results. This tool would return top relevant URLs and their snippets as \textcolor{blue}{\texttt{Observation}}.
    
    \item \texttt{Summarize} This is the key to $R$-Union oepration. This action allows Expander to visit multiple URLs searched for the constant $c$ and summarize the content. The summarization would integrate the retrieved information to obtain a union constant set as stated in Eq.(\ref{eq: prop}). The parameters of this tool are $\phi = \{\textit{urls}, \textit{goal}\}$. This tool would return the summarization of knowledge about $c$ from the given urls as \textcolor{blue}{\texttt{Observation}}.
    
    \item \texttt{Validate} When Expander completes retrieving and summarizing the KPs of constant $C$, it derives a sub-question and uses this tool to validate the results based on our formalization. The validation purposes are to determine: 1) whether the derived sub-question are consistent with $C$ based on the formalization. 2) whether it is too simple that can be directly answered by an LLM. We call QwQ once time per each purpose. In the first consistency validation, we don't check whether $C$ is strictly the answer to the sub-question. Instead, it checks if the type of $C$ satisfies the sub-question. For the second validation, we require QwQ to answer the sub-question. If the prediction is the same as $C$, we regard it as invalid. This tool would return detailed validation results as \textcolor{blue}{\texttt{Observation}}, and the Expander would take the next action according to it.
    
\end{itemize}

The iterative expansion process terminates upon executing the \textit{answer} action, which finalizes the question construction phase with a verified sub-question derived from the accumulated knowledge.

\subsection{Trajectory Construction}

After harvesting the expanded questions, we proceed to construct task-completing trajectories. 
To this end, we instantiate an agent framework based on QwQ structurally aligned with the Expander, adopting the \texttt{ReAct} paradigm~\citep{yao2023react}.
At each timestep, the agent first first produces a \textcolor{cyan}{\texttt{Thought}} $\tau$ followed by an \textcolor{purple}{\texttt{Action}} $\alpha$. 
It receives the \textcolor{blue}{\texttt{Observation}} $o$ of the \textcolor{purple}{\texttt{Action}} to determine the behavior in the next round.

The agent is equipped with two external tools: \texttt{Search} and \texttt{Visit}.
The \texttt{Search} tool conducts Google search with several queries, which is the same as Expander. \texttt{Visit} returns the pages' information for the given URLs.
For each input question, we perform 5 times rollouts.

To ensure the quality and relevance of the collected trajectories, we further design a set of filtering strategies:
\begin{itemize}
    \item \textit{Correctness} We use a judge LLM to exam the final answer of each trajectory and only keep the correct ones. We also remove if there are tool call errors.
    
    \item \textit{Quality} We filter trajectories if they contain hallucinations of guessing observation and severe repetitions.
    
\end{itemize}

We finally obtain $5,000$ trajectories for later supervised training and reinforcement learning.

\subsection{Agent Training}
To train our information-seeking agent, similar to WebDancer~\citep{wu2025webdancer}, we implement supervised fine-tuning (SFT) followed by reinforcement learning (RL).

In SFT, given a trajectory in a sequence of tokens $\gT=(\tau_1, \alpha_1, o_1, ..., \tau_n, \alpha_n, o_n)$, we mask out loss from observation leading to loss:

\begin{equation}
    \label{eq: sft}
    L =  -\frac{1}{\sum_{i=1}^{|\mathcal{\gT}|} \mathbb{I}[x_i \in o]} \sum_{i=1}^{|\gT|} \mathbb{I}[x_i \in o] \cdot \log \pi_{\theta}(x_i \mid x_{<i})
\end{equation}

where $\pi_{\theta}$ is the model to train. Later in RL, we further optimize $\pi_{\theta}$ use the GRPO algorithm~\citep{shao2024deepseekmath}. 
For a question-answer pair $(q,a)$, GRPO samples rollouts $\{y_i\}_{i}^{|G|}$ and updates the policy model by:

\begin{equation}
\label{eq: grpoloss}
\begin{aligned}
\mathcal{J_{\mathrm{GRPO}}}(\theta) =\quad& \mathbb{E}_{(q,a)\sim \mathcal{D}, \{y_i\}_{i=1}^G\sim \pi_{\theta_\text{old}}(\cdot\mid context)}\\&
\Bigg[\frac{1}{\sum_{i=1}^{G}|y_i|}\sum_{i=1}^{G}\sum_{t=1}^{|y_i|} 
\min \Big( r_{i,t}(\theta) \hat{A}_{i,t},  
\ \text{clip} \Big( r_{i,t}(\theta), 1 - {\varepsilon_{low}}, 1 + {\varepsilon_{high}} \Big) \hat{A}_{i,t} \Big) \Bigg] \\
& r_{i,j}(\theta)=\frac{\pi_\theta \bigl(o_i \mid q_i,\,o_{i,<t}\bigr)}{\pi_{\theta_{\mathrm{old}}}\bigl(o_i \mid q_i,\,o_{i,<t}\bigr)} , \quad \hat{A}_{i,j}= \frac{R_i - \mathrm{mean}\bigl(\{R_i\}\bigr)}{\mathrm{std}\bigl(\{R_i\}\bigr)},
\end{aligned}
\end{equation}

where $context$ includes all the model completions and tool responses.
$\varepsilon$ is the clipping range of the importance
sampling ratio \(r_{i,t}(\theta)\).
$\hat{A}_{i,t}$ is an estimator of the advantage of the $i$-th rollout at $t$-th step.

\begin{table}[h]
\small
\centering
\caption{\textbf{Main results} on GAIA and WebWalkerQA benchmarks.
We compare \w~with several cutting-edge baselines methods.
\textbf{bolded} number stands for the best results on the corresponding settings. 
\textcolor{blue}{Blue} scores are the highest among all open-sourced methods.
}
\resizebox{\columnwidth}{!}{%
\begin{tabular}{@{}lc|cccc|cccc@{}}
\toprule
& & \multicolumn{4}{c|}{\textbf{GAIA}}        & \multicolumn{4}{c}{\textbf{WebWalkerQA}} 
            \\ \midrule
\textbf{Backbone} &\textbf{Framework}     & \texttt{Level 1}            & \texttt{Level 2}  & \texttt{Level 3}    & \texttt{Avg.}   &\texttt{Easy}  & \texttt{Medium}  &\texttt{Hard} & \texttt{Avg.} \\ 
\midrule
\multicolumn{10}{c}{\cellcolor{blue!10} \textbf{\textit{No Agency}}} \\
\arrayrulecolor{black}\midrule
Qwen-2.5-7B & Base& 12.8 & 3.8 & 0.0 & 6.8 & 1.25 & 0.8 & 0.7 & 0.8\\
\arrayrulecolor{black!20}\midrule
\multirow{2}{*}{Qwen-2.5-32B} & Base& 20.5 & 9.6 & 8.3 & 13.6 & 3.8 & 2.5 & 3.3 & 3.1\\
 & RAG & 12.8 & 11.8 & 8.3 & 11.8 & 23.1 & 14.3 & 11.3 & 15.3\\
 \arrayrulecolor{black!20}\midrule
Qwen-2.5-72B  & Base& 20.5 & 13.5 & 0.0 & 14.6 &9.4&7.1&3.3&6.3\\
\arrayrulecolor{black!20}\midrule
GPT-4o  & Base& 23.1&15.4&8.3&17.5&6.7&6.0&4.2&5.5\\
\arrayrulecolor{black!20}\midrule
\multirow{2}{*}{QwQ-32B}  & Base& 30.8 & 15.4 & 25.0 & 22.3 &7.5&2.1&4.6&4.3\\
& RAG & 33.3 & 36.5 & 8.3 & 32.0 & 36.9 & 26.1 & 33.5 & 31.2\\
\arrayrulecolor{black!20}\midrule
DeepSeek-R1-671B & Base& 43.6 & 26.9 & 8.3 & 31.1 & 5.0 & 11.8 & 11.3 & 10.0\\
\arrayrulecolor{black}\midrule
\multicolumn{10}{c}{\cellcolor{blue!20}\textbf{\textit{Close-Sourced Agentic Frameworks}}}\\
\midrule
 & \textit{OpenAI DR} & \textcolor{gray}{74.3} & \textcolor{gray}{69.1} & \textcolor{gray}{47.6} & \textcolor{gray}{67.4} & - & - & - & -  \\
\midrule
\multicolumn{10}{c}{\cellcolor{blue!25}\textbf{\textit{Open-sourced
 Agentic Frameworks}}}\\
\midrule


\multirow{3}{*}{Qwen-2.5-32B} & Search-o1 & 33.3 & 25.0 & 0.0 & 28.2 & - & -  &- & - \\ 
& WebDancer & 46.1 & 44.2 & 8.3 & 40.7  & 44.3 & 46.7  & 29.2 &  38.4 \\ 
& \textbf{\w} &61.5 & 53.8& 16.6 & \textbf{52.4} & 58.1 & 51.4 & 47.0 & \textbf{51.4} \\ 
\arrayrulecolor{black!20}\midrule
\multirow{6}{*}{QwQ-32B} & Search-o1 & 53.8 & 34.6 & 16.6 & 39.8 & 43.1 & 35.0 & 27.1 & 34.1 \\ 
& WebThinker-Base & 53.8 & 44.2 & 16.6 &44.7 & 47.2 & 41.1 & 39.2 & 41.9  \\ 
& WebThinker-RL & 56.4 & 50.0 & 16.6 & 48.5 & 58.8 & 44.6 &40.4&46.5   \\ 
& Simple DS & - & - & - & 50.5 & - & -& -  & - \\ 
& WebDancer & 61.5 &  50.0 & 25.0 & 51.5 & 52.5 &  59.6 & 35.4  & 47.9 \\ 
& \textbf{\w} & 69.2 & 50.0 & 16.6 & \textbf{53.3} & 55.8 & 49.2 & 45.4 & \textbf{49.7}  \\ 
\arrayrulecolor{black}\midrule
 \multirow{1}{*}{Qwen-2.5-72B}& \textbf{\w} & 69.2 & 63.4 & 16.6 & \textcolor{blue}{\textbf{60.1}} & 56.2 & 52.1 & 49.5 & \textcolor{blue}{\textbf{52.2}} \\ 

\arrayrulecolor{black}\bottomrule
\end{tabular}
}
\label{tab:main_result}
\end{table}

\section{Experiments}

\subsection{Experimental Setups}
We evaluate WebShaper on two information-seeking benchmarks: \textbf{GAIA}~\citep{mialon2023gaia} and \textbf{WebWalkerQA}~\citep{webwalker}.
We use the \textit{LLM-as-Judges} paradigm to evaluate both tasks using the \texttt{Pass@1} metric, following~\cite{Li2025webthinker}.

We compare our synthesized dataset with several datasets: 

\begin{itemize}
    \item \texttt{WebWalkerQA} employs random walks over interlinked URLs to synthesize questions based on the visited webpages~\citep{webwalker}. 
    The dataset includes both single-source questions, generated from a single visited URL, and multi-source questions, which are constructed using information aggregated from multiple visited URLs.
    \item \texttt{E2HQA} is a dataset introduced by WebDancer~\citep{wu2025webdancer}, where simple questions are systematically rewritten into more complex, challenging ones.
    \item \texttt{MHQA} is a composite dataset that integrates existing single-hop and multi-hop question-answering datasets. The majority of the questions are annotated by humans.
\end{itemize}


We also compare with cutting-edge deep research methods including Search-o1~\citep{li2025search}, WebWalker~\citep{webwalker}, WebDancer~\citep{wu2025webdancer}, WebThinker~\citep{Li2025webthinker}, SimpleDeepResearch~\citep{sun2025simpledeepsearcher}.

\subsection{Main Results}

We compare WebShaper with cutting-edge baselines. The results are shown in Table~\ref{tab:main_result}. \w~achieves best performances on open-sourced methods on both GAIA and WebWalkerQA. Among all GAIA results, \w-on Qwen-2.5-72B excels second-best method WebSailor 4.7 score. On WebWalkerQA \w~obtains the highest 52.2 score. 

\w~performs the best on each backbone setting. These results indicate the generalizability of the synthesized data on different models. \w~is currently the only open source method with a score of more than 60 points, which is close to the SOTA OpenAI DR system. \w~is implemented fully under open-sourced LLMs, demonstrating that high-quality IS data can deeply stimulate the ability of DR Agents.

\subsection{Discussions}

\subsubsection{Data Statistics}

\begin{wrapfigure}{r}{0.45\textwidth}
\captionsetup{aboveskip=0pt, belowskip=-15pt} 
  \centering
  \includegraphics[width=0.45\textwidth]{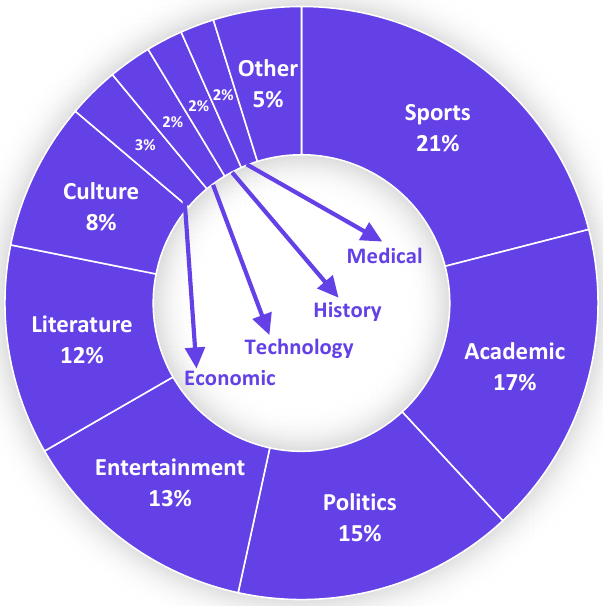} 
  \caption{Domain distribution.}
  \label{fig: domain}
\end{wrapfigure}

We analyze the domain distributions of our dataset. The domain distribution of our dataset demonstrates rather comprehensive coverage across multiple thematic areas, as visualized in Figure \ref{fig: domain}. Our construction of seed tasks leads to questions about various topics and entities. Our agentic expansion further strengthens these benefits. The dataset achieves significant diversity through its balanced representation of major domains such as \texttt{Sports}, \texttt{Politics}, and \texttt{Entertainment}. 

This deliberate design ensures our dataset not only avoids over-reliance on any single domain but also maintains sufficient sample density across diverse topics. The empirical balance between breadth and depth enables robust training of a domain-agnostic information-seeking agent. Such characteristics position our dataset as particularly suitable for train multi-domain IS tasks and fostering interdisciplinary research.

\subsubsection{Data Comparison}

\begin{table}
\small
\centering
\caption{\textbf{SFT Data Comparison} on GAIA benchmarks.
The best results among all backbones are in \textbf{bolded}.
}

\setlength{\tabcolsep}{10pt} 
\begin{tabular}{@{}lc|cccc@{}}
\toprule
& & \multicolumn{4}{c}{\textbf{GAIA}} \\ \midrule
\textbf{Backbone} &\textbf{Dataset}     & \texttt{Level 1}            & \texttt{Level 2}  & \texttt{Level 3}    & \texttt{Avg.}  \\ 
\midrule

\multirow{4}{*}{Qwen-2.5-32B} & WebWalkerQA & 43.5 & 30.7 & 0.0 & 32.0  \\ 
& E2HQA & 56.4 & 36.5 & 0.0 & 39.8 \\ 
& MHQA & 43.5 & 36.5 & 8.3 & 35.9 \\ 
& \textbf{\w} & 56.4 & 40.3 & 16.6 & \textbf{43.6}  \\ 

\arrayrulecolor{black!20}\midrule

\multirow{4}{*}{Qwen-2.5-72B} & WebWalkerQA & 53.8 &  36.5 & 0.0 & 38.8 \\ 
& E2HQA & 61.5 & 38.4 & 16.6 & 44.6 \\ 
& MHQA & 56.4 & 44.2 & 0.0 & 43.6 \\ 
& \textbf{\w} & 56.4 & 48.0 & 0.0 & \textbf{45.6} \\ 

\arrayrulecolor{black!20}\midrule

\multirow{4}{*}{QwQ-32B} & WebWalkerQA & 66.6 & 38.4 & 8.3 & 45.6  \\ 
& E2HQA & 58.9 & 42.3 & 16.6 & 45.6  \\ 
& MHQA & 51.2 & 44.2 & 0.0 & 41.7   \\ 
& \textbf{\w} & 69.2 & 50.0 & 16.6 & \textbf{53.3}  \\ 

\arrayrulecolor{black}\bottomrule
\end{tabular}

\label{tab: data_comparison}
\end{table}

In this section, we compare \w~with baseline datasets. We sample 5,000 data from each dataset. Then we supervised fine-tune Qwen2.5-32B, Qwen2.5-72B~\citep{qwen2.5}, and QwQ~\citep{qwq} on each dataset. The GAIA results are shown in Table~\ref{tab: data_comparison}.

The comparative results presented in Table~\ref{tab: data_comparison} demonstrate the superior performance of \w~across all backbone architectures on the GAIA benchmarks. Notably, \w~achieves the highest average scores for Qwen-2.5-32B, Qwen-2.5-72B, and QwQ-32B, respectively, significantly outperforming baseline datasets like WebWalkerQA and MHQA.

Even when comparing models with similar parameter counts (e.g., Qwen-2.5-32B), \w-enabled models show substantial improvements.
The consistency of \w's performance improvement suggests its effectiveness in enhancing model capabilities regardless of architectural design.
These findings validate the effectiveness of formalization-driven data synthesis, making it a superior training data solution for information-seeking tasks.

\subsubsection{RL Stimulation}

We compare GAIA performances between models trained after SFT and reinforcement learning. RL models are trained based on the SFT results. As illustrated in Figure \ref{fig:rl_gaia} and \ref{fig:rl_ww}, our experimental results demonstrate significant performance improvements across both Qwen2.5-32B and Qwen2.5-72B models after RL training on both GAIA and WebWalkerQA. The Pass@1 metric shows notable enhancements of +7.8 points for the 32B model and an even more pronounced +13.5 points increase for the 72B variant on GAIA. On WebWalkerQA, \w~also improves IS capability on a large scale. This substantial gain highlights the critical role of RL in activating advanced information-seeking capabilities within LLM.

\begin{figure}[h]
    \centering
    \begin{subfigure}[b]{0.4\textwidth}
        \includegraphics[width=\textwidth]{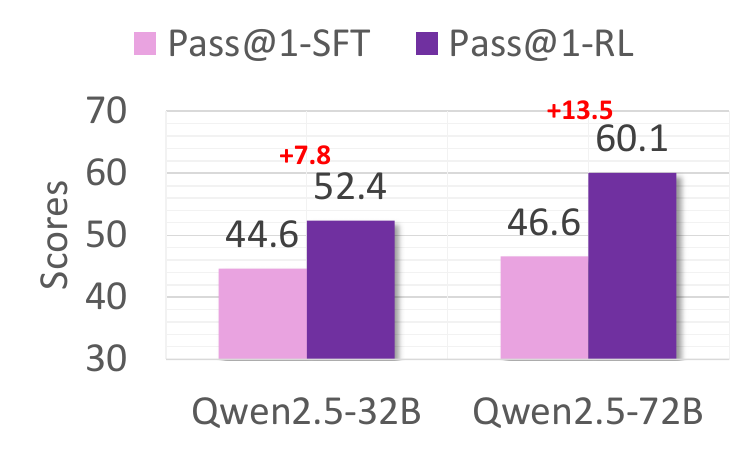}
        \caption{GAIA.}
        \label{fig:rl_gaia}
    \end{subfigure}
    \begin{subfigure}[b]{0.4\textwidth}
        \includegraphics[width=\textwidth]{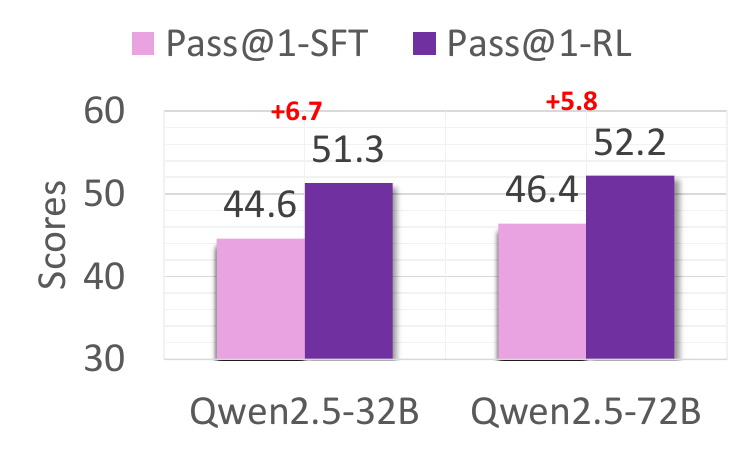}
        \caption{WebWalkerQA.}
        \label{fig:rl_ww}
    \end{subfigure}
    \label{fig: rl}
    \caption{Comparison with SFT and RL.}
\end{figure}

The breadth and complexity of tasks introduced by our task formalization stimulate dynamic IS strategies during RL. Unlike generic datasets, our carefully curated scenarios require the model to iteratively query relevant information, effectively "training" it to prioritize contextually aligned knowledge fragments.

\subsubsection{Formalization}

In this part, we validate whether our formalization truly improves the dataset. We compare our dataset to a variation that uses natural language during the data synthesis. This variation takes the current question in each iteration and also uses the Expander agent to expand it to a new question. The Expander process in natural language as well. We SFT Qwen2.5-32B, Qwen2.5-72B, and QwQ on both datasets. The other training setting remains the same. We compare the training results with the variation as shown in Figure~\ref{fig:dis_fl}.

FL excels NL in all base model backbones. These results indicate that our formalization language can mitigate the limitations incurred by natural language. Our IS task formalization can synthesize more forms of tasks. It also reduces error propagation in the synthesis process, leading to consistent and precise question-and-answer pairs.

\subsubsection{Layer-wise Expansion Strategy}

We evaluate the effectiveness of the Layer-wise structure. In order to compare, we set up a variation which uses the same Expander and task formalization but expands the question in a sequence as shown in Figure~\ref{fig: expansion paradigm}. We SFT Qwen2.5-32B, Qwen2.5-72B, and QwQ on both datasets. Other training settings remain the same. The results as shown in Figure~\ref{fig:dis_lw}.

The layer-wise structure performs better than the Sequential structure in all base models. The results show that our method truly mitigates shortcomings such as Redundancy and Reasoning shortcuts. Our method improves the final performance via the controllable structures.

\subsubsection{Tool Call Analysis}

\begin{figure}
    \centering
    \begin{subfigure}[b]{0.47\textwidth}
        \includegraphics[width=1\textwidth]{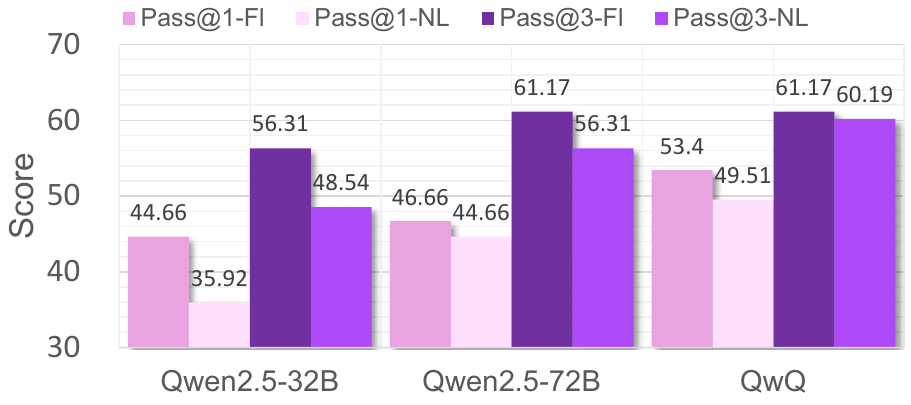}
        \caption{Formalization ablation analysis.}
        \label{fig:dis_fl}
    \end{subfigure}
    \hfill
    \begin{subfigure}[b]{0.47\textwidth}
        \includegraphics[width=1\textwidth]{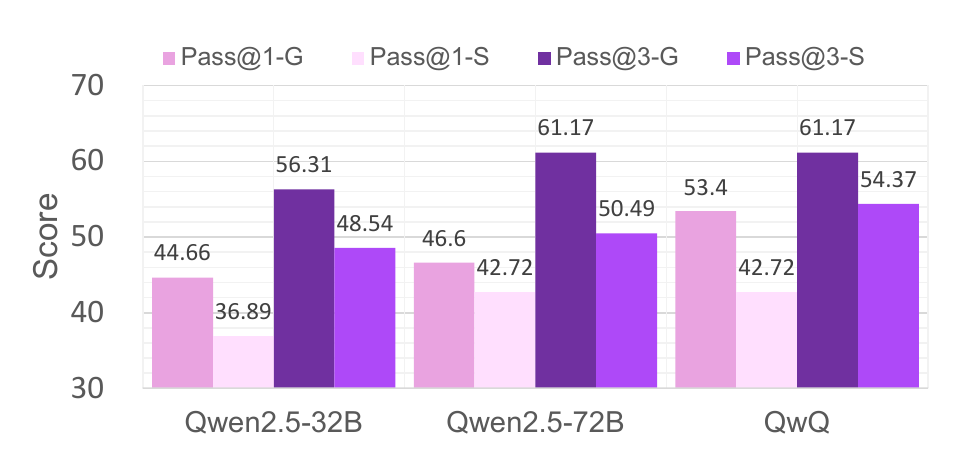}
        \caption{Layer-wise structure ablation analysis.}
        \label{fig:dis_lw}
    \end{subfigure}
    \label{fig: disscusion}
    \caption{Discussions on formalization and layer-wise structure.}
\end{figure}

\begin{figure}[htbp]
    \centering
    \begin{subfigure}[b]{0.31\textwidth}
        \includegraphics[width=\textwidth]{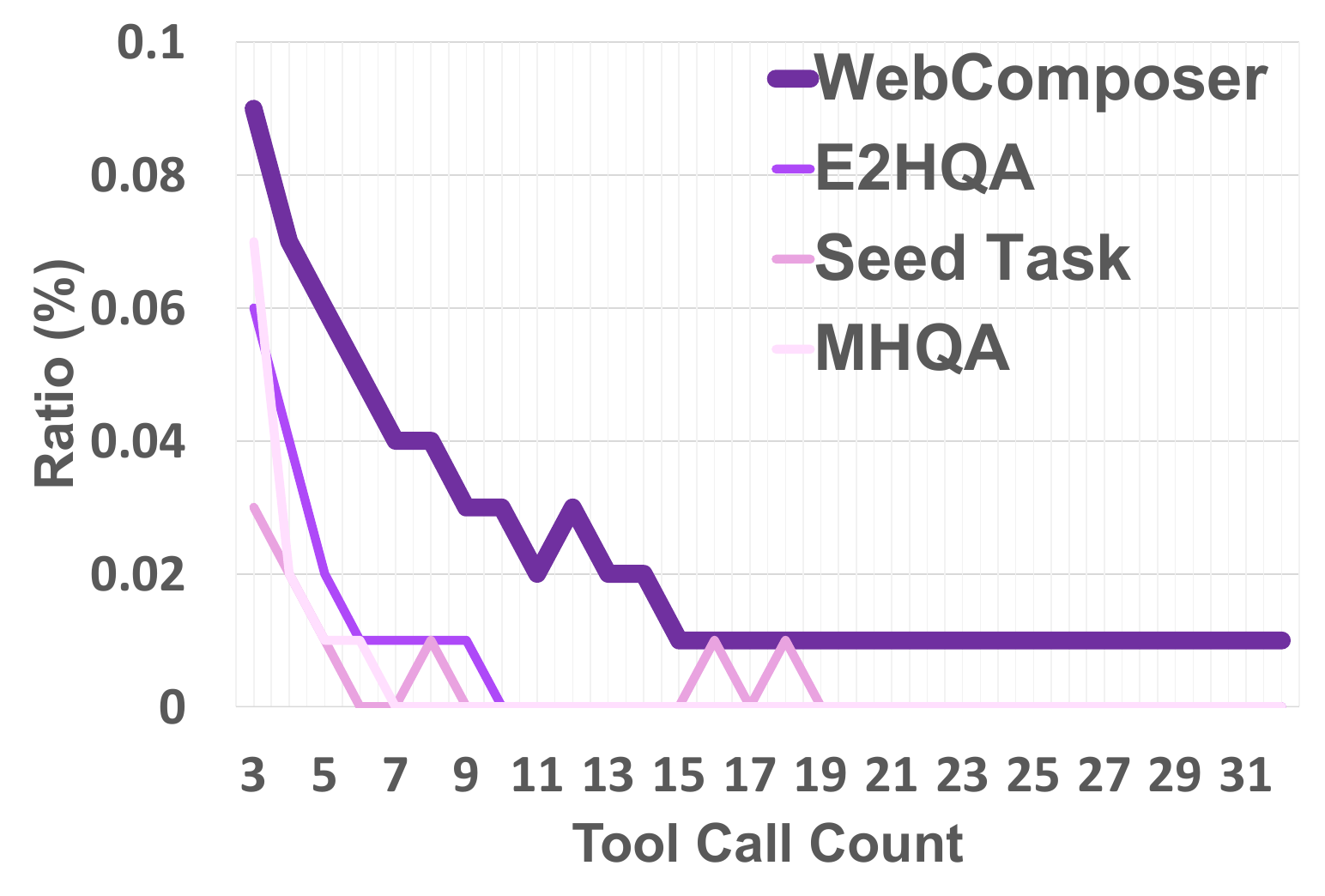}
        \caption{Search distribution.}
        \label{fig:search}
    \end{subfigure}
    \hfill
    \begin{subfigure}[b]{0.31\textwidth}
        \includegraphics[width=\textwidth]{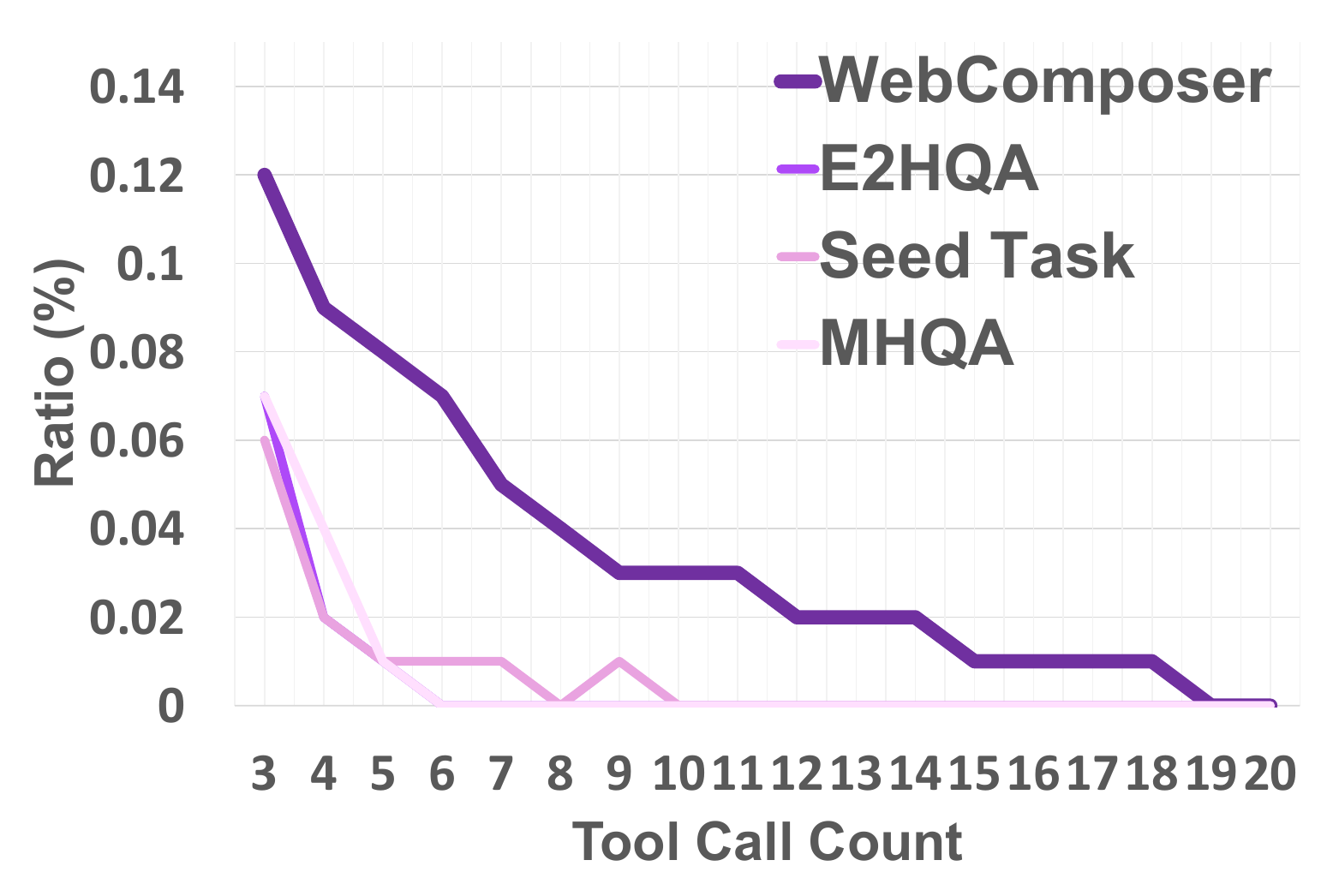}
        \caption{Visit distribution.}
        \label{fig:visit}
    \end{subfigure}
    \hfill
    \begin{subfigure}[b]{0.31\textwidth}
        \includegraphics[width=\textwidth]{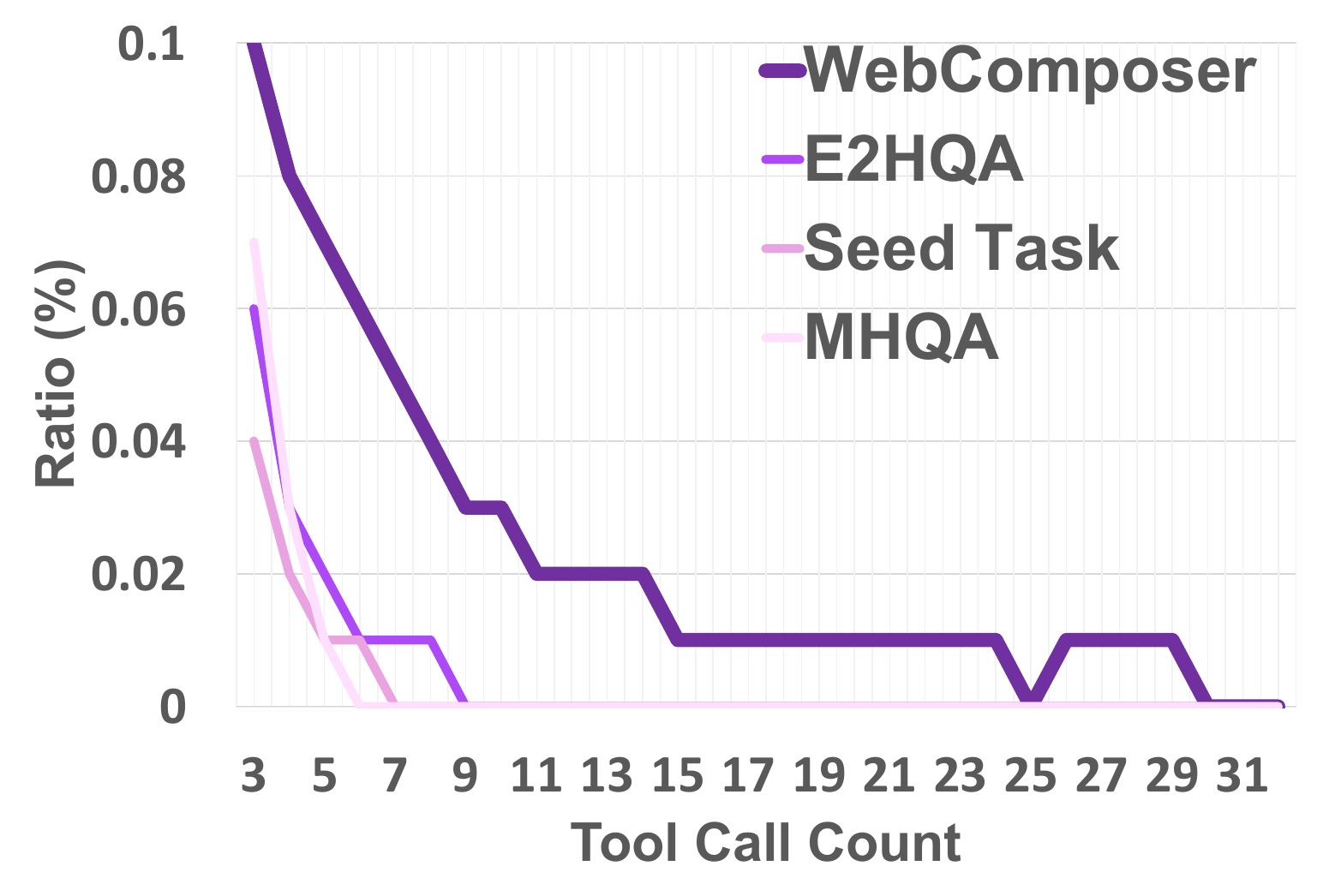}
        \caption{Total tool distribution.}
        \label{fig:tc}
    \end{subfigure}
    \caption{Tool call analysis.}
    \label{fig:three}
\end{figure}

We show the distribution tool call count of the agent to solve a question in different datasets. We illustrate the tool call counts larger than 3, which shows the complicated trajectories proportion.

\textbf{Search Complexity (Figure~\ref{fig:search}})
\w~ exhibits a pronounced long-tail distribution. Pretty much tasks requiring over 3 search operations. This is 3-4x higher than E2HQA and MHQA, indicating superior handling of information-rich queries requiring iterative refinement.

\textbf{Knowledge Navigation (Figure~\ref{fig:visit}})
The visit operation distribution shows \w~ maintains a high ratio for trajectories exceeding 3 steps, while competing datasets sharply drop after 10 steps. This sustained capability reflects enhanced navigational intelligence in IS tasks.

\textbf{Composite Reasoning (Figure~\ref{fig:tc}})
In total tool calls, \w's doubles the count larger than 3. Notably, it sustains non-zero proportions up to 30 tool calls, demonstrating scalability for highly complex compositional reasoning.

These findings underscore \w's unique ability to manage intricate reasoning chains, with statistically significantly higher proportions of multi-hop reasoning trajectories across all modalities. The sustained performance in extended tool call sequences suggests superior architectural capacity for managing complex task decompositions compared to existing benchmarks.

\subsubsection{Case Study}

\begin{figure*}[h]
    \centering
    \includegraphics[width=1\linewidth]{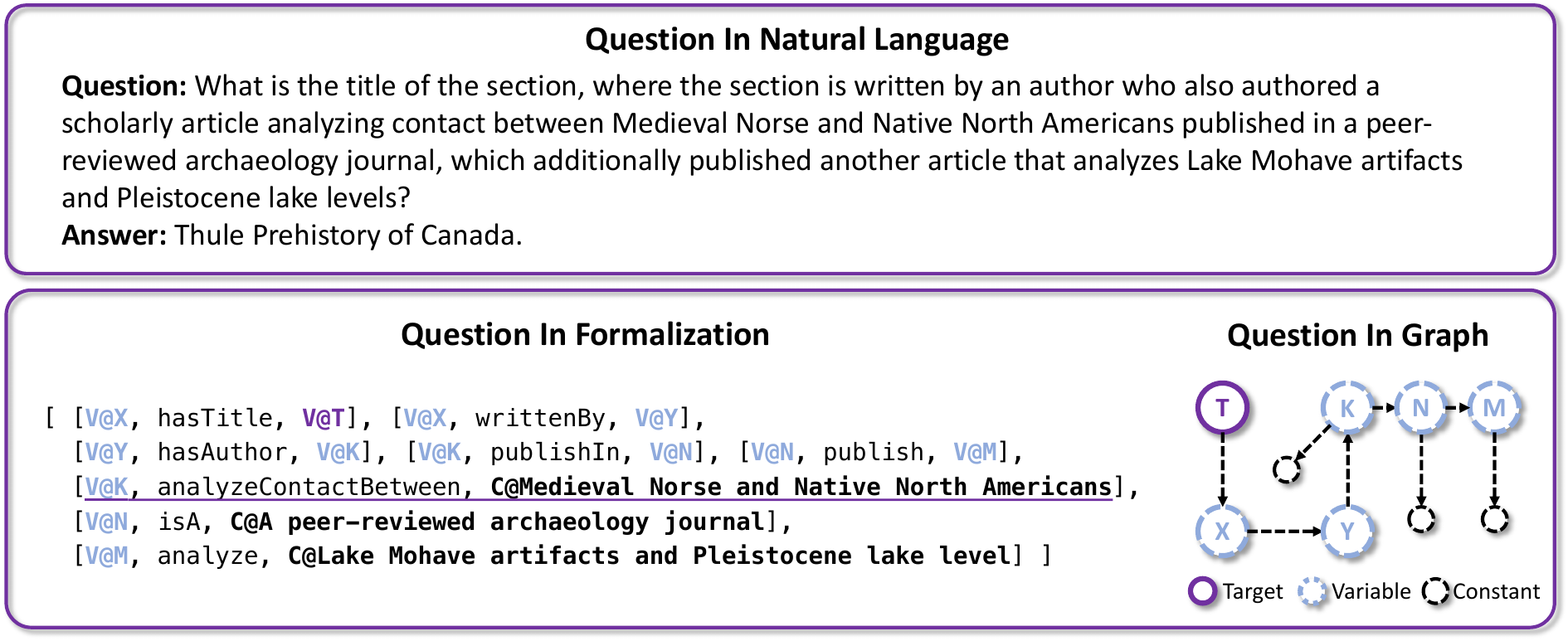}
    \caption{Case studies of our synthesized data. We show a question in natural language, our formalization, and a graph respectively.}
    \label{fig: case}
\end{figure*}


We present a representative case study in Figure~\ref{fig: case}. Compared with linear structure and sequential structure, our synthesized data has no problems of redundancy and reasoning shortcuts. The model should strictly seek information and reason alongside all the variables to find the answer. There are no constants directly connected to the target variable $T$ or variables close to it. Besides, there are no constants connected to other constants. We show more cases in the Appendix \ref{app: case}.

Moreover, $R$-Union effects well in our data. The underlined FP is a summarization of distributed web contents, leading to more difficulty in resolving the variables $K$, $N$, and $M$. Benefiting from the formalization, our data contains a variety of IS forms, which can fully stimulate the different IS capabilities of the model.

\section{Related Work}


\subsection{Information-Seeking Data Synthesis}
Recent advances in information-seeking agents aim to integrate web interaction into LLMs' reasoning~\citep{Li2025webthinker,song2025r1,jin2025search,shi2025pangu,chen2025research,zhang2025evolvesearch,wu2025masksearchuniversalpretrainingframework}.
While these works exhibit promising capabilities, they predominantly depend on limited or overly simplistic datasets~\citep{yang2018hotpotqa,joshi2017triviaqa,kwiatkowski2019natural}.
Concurrently, several recent benchmarks, such as GAIA~\citep{mialon2023gaia}, BrowseComp~\citep{bc_en}, and BrowseComp-zh~\citep{bc_zh}, provide only test sets, which restricts their applicability for training agents.
Early efforts, such as WebWalkerQA~\citep{webwalker}, explored simulating human-like web navigation to generate QA pairs by constructing linear information chains.
\textsc{crawl}QA within WebDancer~\citep{wu2025webdancer} expands simple questions to more complex ones by aggregating external information, while SailorFog-QA within WebSailor~\citep{li2025websailor} leverages entity coreference networks to support fuzzy reasoning. 
These methods are predominantly information-driven, focusing on strategies for retrieving and connecting knowledge.
In contrast, our approach is formalization-driven, emphasizing the structural representation and principled modeling of the QA process.

\subsection{Formalization-based Data Synthesis}

Formalization-based data synthesis is common in the study of theorem proving in LLM mathematics.
DeepSeek-MathProver synthesizes data to train a math theorem prover. It transforms high school and undergraduate level math competition problems into formal statements. It then automatically generates proofs by an LLM and verify the correctness of these proofs in a Lean 4 environment~\citep{xin2024deepseek}.
After that, DeepSeek-MathProverV2 decomposes the proof into subgoals. Then synthesis training data to train a small model for the subgoal proof in formal statements~\citep{ren2025deepseek}. 
\cite{leang2025theorem} synthesizes the training data of Theorem Prover as a Judge based on mathematical formalization. Each question needs to go through multiple formal language and natural language conversion and verification processes to ensure the validity of the data. They trained the judger on the synthetic data, and then used the judger to replace the human evaluation in RLHF~\citep{ouyang2022training}, improving the effect of DPO~\cite{rafailov2023direct}.
Goedel-Prover trains LLMs to convert natural language math problems to formal statements in Lean 4. Next, it creates a large dataset of formal proofs by training a series of provers, where each new prover can prove statements that could not be proved by previous ones~\citep{lin2025goedel}.
Another group of related studies is synthesizing training data for knowledge base question answering. These methods formalize the KBQA question via propositional logic.
LACT constructs the arbitrary first-order logical queries similar to \citet{choudhary2023complex} via binary
tree decomposition~\citep{xia2025improving}. This results in an SFT dataset. It then fine-tunes on an easy-to-hard curriculum to stimulate the reasoning capability of LLMs. 
Rather than proposition logics, our work establishes IS formalization via set theory.

\section{Conclusion}

This work presents a paradigm-shifting framework for synthesizing training data \w~for information-seeking (IS) agents through formalization-driven design. By establishing a set theory-based mathematical formalization of IS tasks, we address critical limitations in existing information-driven approaches that suffer from structural inconsistencies, task controllability, diversity, and coverage. The composition of proposed Knowledge Projections enables precise engineering of reasoning structures and complexity. Our agentic Expander module further ensures systematic expansion of formalized tasks with a layer-wise expansion paradigm, combining autonomous knowledge retrieval and rigorous validation to minimize redundancy and prevent reasoning shortcuts.  
Experimental results demonstrate that \w~not only achieves state-of-the-art performance on GAIA and WebWalkerQA benchmarks but also introduces controllability over task design, enabling deliberate engineering of cognitive challenges for IS agents. This formalization-driven paradigm shifts the focus from reactive information organization to proactive task specification, opening new avenues for advancing agent capabilities.

\clearpage
\bibliography{biblio}
\bibliographystyle{colm2024_conference}

\clearpage
\appendix
\section{Agent Details}
Following~\cite{wu2025webdancer}, WebComposer uses two tools, \textit{search} and \textit{visit},
which are regarded as fundamental to the information seeking process~\citep{zhu2025oagents}:
\begin{itemize}
    \item \textbf{\textit{Search}} interfaces with the Google search engine to retrieve relevant documents given natural language queries. It supports multiple queries in parallel and returns the top-10 results for each query, where each result includes a title, a snippet, and the corresponding URL.
    \item \textbf{Visit} enables targeted extraction from specific web pages. Each page is paired with a designated visit goal. The full content of the page is first retrieved using Jina~\citep{jina}, after which a summarization model (Qwen-2.5-72B in our implementation) extracts information relevant to the specified goal.
\end{itemize}

\section{Training Details}

\subsection{SFT}
For SFT, we use a batch size of 32 and a learning rate of 5e-6, warmup plus cosine decay schedule.
We also apply a weight decay of 0.1.
\subsection{RL}
For RL training~\citep{sheng2025hybridflow}, each group consists of 8 rollouts.
The temperature is 1.0, $top_p=1.0$, the batch size is 128, the mini batch size is 32, and the learning rate is 1e-6.

\section{Case Study}
\label{app: case}

\begin{figure*}[h]
    \centering
    \includegraphics[width=1\linewidth]{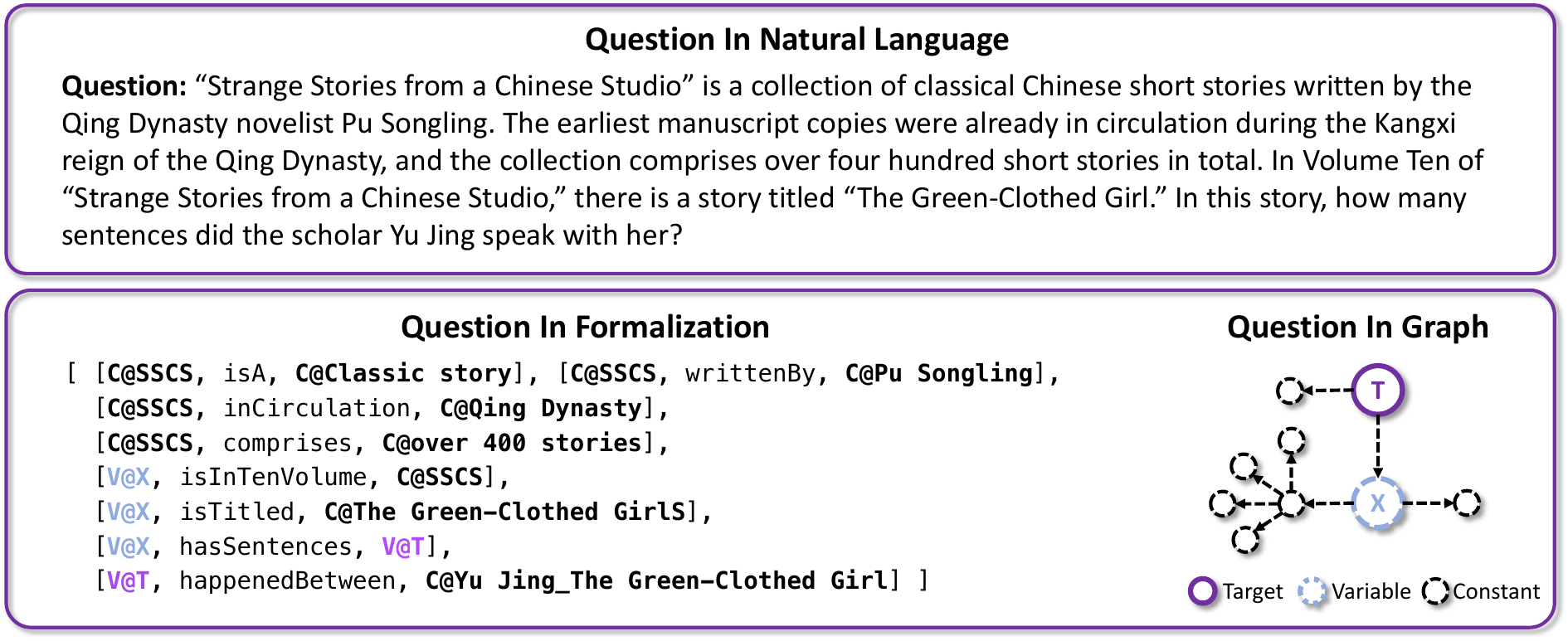}
    \caption{Case comparison. ``SSCS'' stands for "Strange Stories from a Chinese Studio".}
    \label{fig: kimi_case}
\end{figure*}

We compare a representative example shown by KIMI-Researcher~\citep{kimidr}, illustrated in Figure~\ref{fig: kimi_case}. 
The case includes redundant information, such as multiple constants connected to ``SSCS'', which contribute little to answering the question. Additionally, a reasoning shortcut is observed that directly connects to the target variable. Despite the apparent complexity, the underlying reasoning structure is relatively simple, consisting of a single-hop reasoning path.

\section{Broader Impact}

Our data synthesis framework presents a foundational methodology for constructing training data for intelligent agents, featuring two key innovations: \textbf{task formalization} and \textbf{agent-driven synthesis}. By explicitly modeling tasks as structured, formal representations and leveraging proxy agents to synthesize data, this work provides a systematic approach to address the critical challenge of generating training data that transcends the complexity and unpredictability of naturally occurring human-centric environments. Below, we discuss the broader implications for agent research.  

\paragraph{Implications in Agent Training Data Synthesis} 
Traditional approaches to training agents often rely on datasets derived from human-generated interactions, which are inherently limited in diversity, scalability, and controllability. We emphasize that effective agent training requires \textbf{explicit formalization of task structures}—a prerequisite for achieving precise control over data properties. By decoupling task definitions from data generation, the framework enables:

\begin{itemize}
    \item \textit{Targeted Complexity Management}: Tasks can be systematically parameterized to adjust difficulty, modality, or compositional structure, ensuring agents are exposed to controlled gradients of challenge. This contrasts with ad-hoc methods that risk overfitting to biases in natural data or failing to stress-test edge cases.  
    \item \textit{Quality Assurance}: Formal task models act as a "specification" for data synthesis, reducing noise and ensuring consistency. This is critical for applications where reliability and safety are paramount, such as autonomous systems or medical AI.  
    \item \textit{Scalable Data Generation}: Agent-driven synthesis eliminates the need for laborious manual annotation or heuristic-based pipelines by directly translating formal task representations into training instances. This reduces computational overhead while preserving fidelity to the task’s intended design. 
\end{itemize}

\paragraph{Implications for AI Research and Development}
Our architecture provides insights for advancing AI systems:  
\begin{itemize}

\item \textit{Beyond Human-Level Complexity}: By formalizing tasks independent of human behavioral priors, the framework enables training data to exceed the implicit constraints of natural data. This opens pathways to train agents for domains requiring superhuman reasoning (e.g., advanced scientific modeling, combinatorial optimization).   

\item \textit{Cross-Domain/Task Generalization}: Formal task representations abstract away domain-specific noise, allowing agents to learn invariant principles applicable across diverse contexts.   
\end{itemize}

\end{document}